\theoremstyle{plain}
\newtheorem{theorem}{Theorem}[section]
\newtheorem{lemma}[theorem]{Lemma}
\theoremstyle{definition}
\theoremstyle{remark}
\theoremstyle{remark}
\theoremstyle{definition}
\newcommand{\bbm}{\bm{m}}
\newcommand{\Rbb}{\mathbb{R}}
\newcommand{\rone}[1]{}
\newcommand{\rtwo}[1]{}
\newcommand{\rthree}[1]{}
\newcommand{\ours}{\textrm{AutoMAN}}
\newcommand{\bbx}{\bm{x}}
\newcommand{\bbX}{\bm{X}}
\newcommand{\bbF}{\bm{F}}
\newcommand\blfootnote[1]{%
  \begingroup
  \renewcommand\thefootnote{}\footnote{#1}%
  \addtocounter{footnote}{-1}%
  \endgroup
}
\begin{document}
\title{Learned Feature Importance Scores for Automated Feature Engineering}
\date{}
\author{
  Yihe Dong$^\dagger$, 
  Sercan Arik$^\dagger$, 
  Nate Yoder$^\S$, 
  Tomas Pfister$^\dagger$
  }

\maketitle
\begin{abstract}
Feature engineering has demonstrated substantial utility for many machine learning workflows, such as in the small data regime or when distribution shifts are severe. Thus automating this capability can relieve much manual effort and improve model performance. 
Towards this, we propose \ours{}, or \underline{Auto}mated \underline{M}ask-based Fe\underline{a}ture E\underline{n}gineering, an automated feature engineering framework that achieves high accuracy, low latency, and can be extended to heterogeneous and time-varying data. \ours{} is based on effectively exploring the candidate transforms space, \textit{without} explicitly manifesting transformed features. This is achieved by learning feature importance masks, which can be extended to support other modalities such as time series. \ours{} learns feature transform importance end-to-end, incorporating a dataset's task target directly into feature engineering, resulting in state-of-the-art performance with significantly lower latency compared to alternatives. 
\end{abstract}
\blfootnote{$^\dagger$ Google. $^\S$Work done while at Google.}\\


\maketitle

\section{Introduction}
Feature engineering is a vital component of a data scientist's toolkit, leading to the success of many models and workflows, such as the winning solultions of notable data science competitions like the Netflix Competition \cite{netflixPrize}. 
Feature engineering can have multitude of benefits.
It has gained importance given the modern data-gathering processes, where the proliferation of data often decreases the signal-to-noise ratio.
It is an effective way of transferring the inductive biases about the target task into the model. It can specifically be useful to improve quality in the small data regime or when severe distribution shifts are present, especially given the challenge that high capacity models like deep neural networks are prone to overfit on noise and spurious correlations, in the underspecification regimes where their trainability and generalization are limited \citep{pmlr-v119-sagawa20a}. 
Feature engineering can be the recipe for obtaining high performance interpretable models that absorb task complexity during feature generation rather than downstream model training. Moreover, it can have product deployment and maintanence benefits via cost-effective feature storage and management systems. Lastly, it can lead to more controllable models for gradual model updates.


Automating the process of feature engineering is highly desired, with the motivation of replacing the traditionally time-consuming and expertise-heavy process of crafting powerful features. 
A variety of approaches have been explored towards automated feature engineering, including search-based algorithms that explore the feature space for optimal combinations \citep{openfe, Kanter2015DeepFS, Kaul2017AutoLearnA,shi_2020}, meta-learning approaches that leverage past feature engineering experiences \citep{fetch}, and deep learning models that automatically extract complex features from data \citep{nas}. These have shown promise in real world scenarios, yet challenges remain in obtaining robust performance in a variety of data regimes; mitigating feature explosion caused by complex search spaces; handling high-dimensional and time-varying data; effective scalability to high number of samples and features, while yielding low feature engineering latency and cost; and obtaining interpretability via the discovered features.


In light of these, we propose \ours, a novel automated feature engineering framework that achieves state-of-the-art downstream task performance demonstrated on a variety of benchmarks, while being scalable and adaptable to time series datasets.
\ours{} starts with a pre-determined set of candidate transform functions, then explores the space of all transform-function-feature combinations implicitly by
learning a feature importance mask on the input features for each transform function.
To achieve high accuracy, \ours{} uses the downstream task as the feature engineering objective, thus optimizing feature transforms to directly benefit the task metric \textit{end-to-end}. In particular, \ours{} sidesteps the need to incorporate the task metric into a rewards function as in some prior works \citep{fetch, wang_reinforce_afe}.
To achieve the level of latency required for high scalability, \ours{} rests on the premise that, rather than explicitly manifesting every function in the transform space, as in many prior works, and perform gradient descent on a large number of candidate features, it is sufficient to learn a select set of expressive transform functions with a high potential of introducing meaningful features that improve final task performance. In addition, we demonstration extension of \ours{} to time series data, with the incorporation of learnable temporal masks.

Our main contributions can be highlighted as:
\begin{itemize}
    \item We propose \ours{}, a simple yet effective feature importance masking based feature engineering approach. \ours{} uses a small set of expert-curated feature transform functions, motivated by theory. 
    \item \ours{} can be adapted to time series via learnable temporal masks, yielding temporal feature engineering.
    \item The feature engineering complexity of \ours{} scales linearly with the number of features and the number of samples -- thus it can handle large-scale datasets effectively.
    \item \ours{} trains end-to-end with respect to the dataset task objective, and can be easily integrated into any gradient-based model. Experiments show \ours{} notably outperforms baseline automated feature engineering methods across real world datasets.
\end{itemize}

\section{Methods}

\begin{wrapfigure}{R}{10cm}
\centering
\includegraphics[width=1.\linewidth]{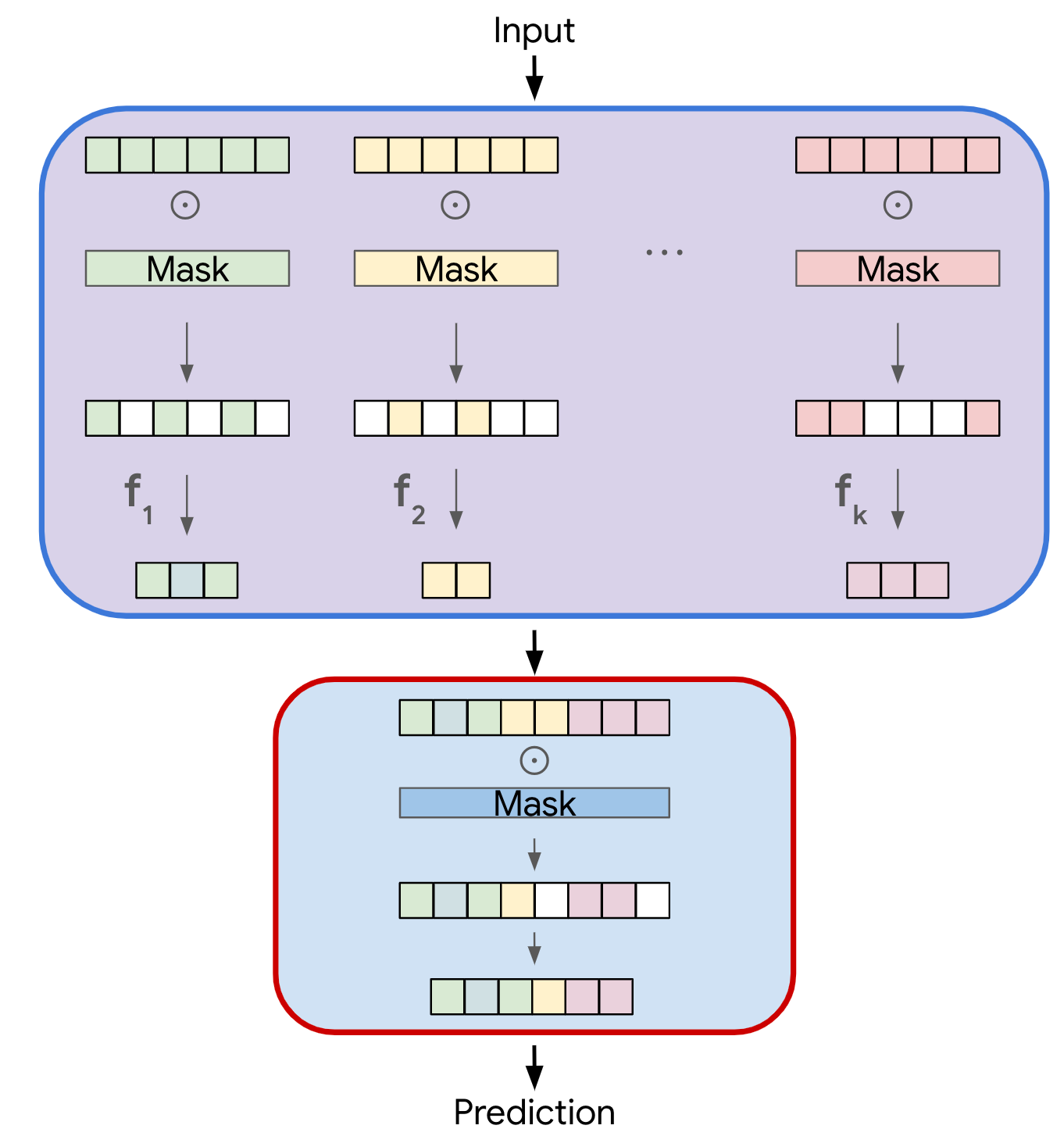}
\caption{Overview of the proposed approach, \ours{}. Local importance masks are learned to select the most relevant features for a given transform, and global importance masks are learned to select the most relevant transformed features across all transforms, all end-to-end using the downstream task objective. The input features are fed into all pertinent transform candidates, depending on whether a feature is categorical, numerical, or temporal. The model prediction head consists of an MLP network. The transform functions $\{f_i\}_{i=1}^k$ are applied to their respective selected features.
}
\label{fig:overall_diagram}
\vspace{-1pt}
\end{wrapfigure}

Fig. \ref{fig:overall_diagram} overviews the proposed \ours{} framework.
\ours{} makes yields effective discovery of engineered features, starting the transform search from a human expert curated set of candidate transform functions, many with learnable components; and learning feature importance masks to explore this search space continuously, without exhaustively manifesting feature transforms. Algorithm~\ref{alg:afe} provides an overview of the \ours{} framework.

\textbf{Notations}. Throughout this work, we let $\mathbf{X} \in \Rbb^{n\times d}$ denote the input data, 
and $\bbm$, $\bbm_{loc}$, and $\bbm_{glb}$ the learned feature importance masks. $\bbF=\{f_i\}_{i=1}^k$ denote the set of $k$ candidate transform functions. 
We use $\odot$ to denote element-wise multiplication between input samples and the mask $\bbm$: $\mathbf{X} = \mathbf{X} \odot \bbm $. 
While there does not exist a single optimal set of transform functions in terms of downstream task performance, we refer to any such set of optimal functions as the target.

\subsection{Efficient search in feature transform space}

\ours{} is based on starting the feature engineering search from a human expert curated set of candidate transform functions. We limit the transform function search space to a small set of functions (shared across the tasks we experiment in this paper). Examples include normalization transform and feature-wise aggregations. To optimize the transform search for a given dataset, many such transforms contain a learnable component. For instance, for a normalization transform that divides each sample by a scaler and subtracts a mean, both the scaler and the mean are learned by the model.
\S\ref{sec:transforms} contains details on the set of transform functions explored. 

Note that while this pre-determined transform set constrains the transform space explored, it greatly improves scalability, as the time complexity is often exponential with respect to the number of transforms explored in standard feature discovery, where a common assumption is that every transform function can be applied to all features, and transform functions can be arbitrarily composed with each other.
In practice, we found that a judiciously-chosen set of general transformations suffices for the myriad of real world datasets tested, consistent with the theory in \S\ref{sec:expressiveness} that a small family of transform functions can well-approximate arbitrary continuous transform functions. That being said, \ours{} can take advantage of more candidates efficiently as it has linear dependence on the number of transform functions (see \S\ref{sec:complexity}). 

\begin{figure}[h]
\centering
\includegraphics[width=0.35\linewidth]{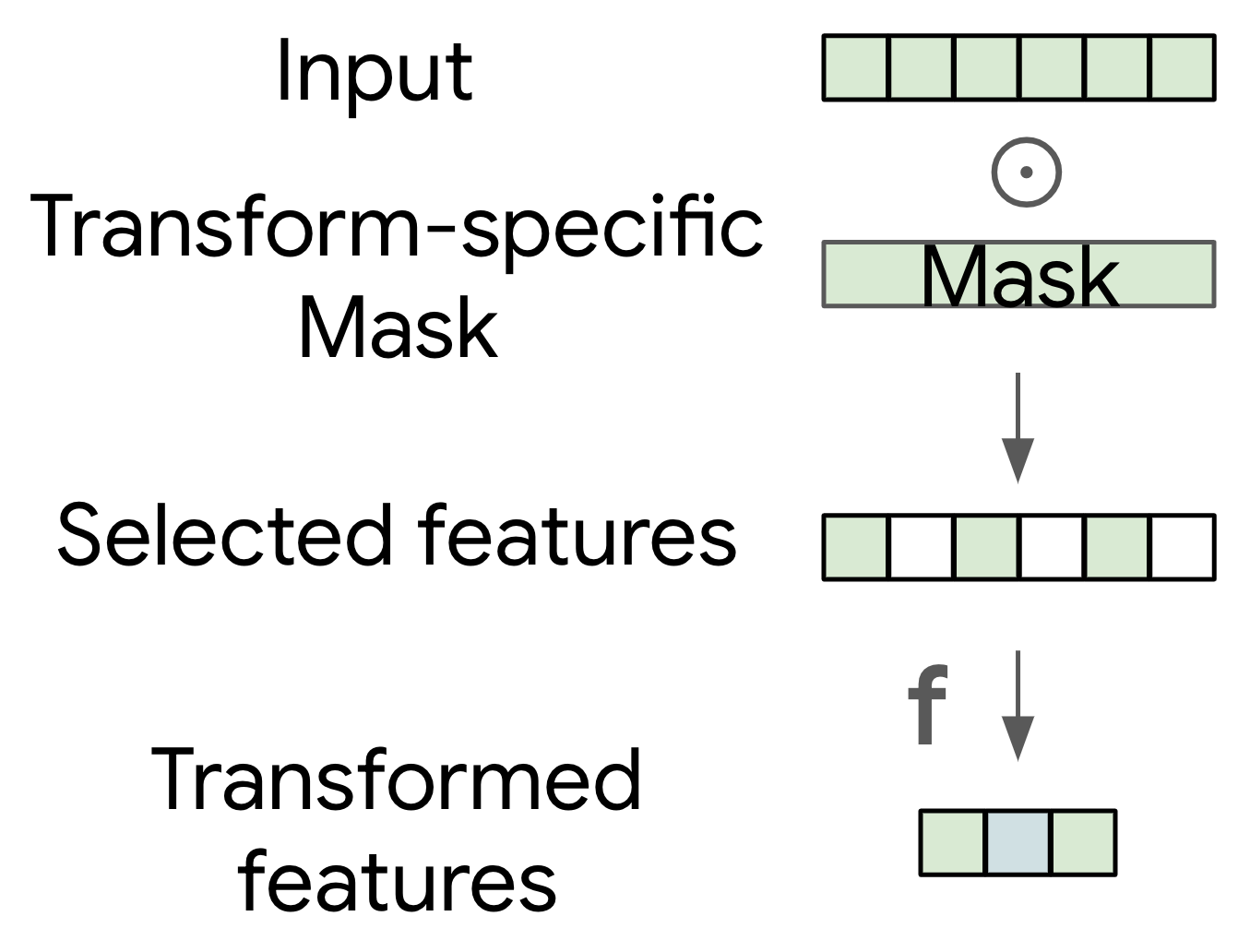}
\caption{Learnable selection of features for a given transform $f$ via local feature importance masking, where $\odot$ denotes element-wise multiplication. Here, three features are selected and weighted by the mask and input into $f$.}
\label{fig:local_mask}
\end{figure}

\subsection{Learning feature importance masks}
Regarding making the actual search for new features implicit, i.e. without explicitly manifesting all possible transform-feature combinations, we propose a learnable feature importance masking mechanism.

Specifically, \ours{} learns two types of feature importance masks: 1) local masks $\{\bbm_{loc}^{(f)}\}_{f\in \bbF}$ on input features for each candidate transform function $f$, illustrated in Figure~\ref{fig:local_mask}, and 2) a global mask $\bbm_{glb}$ on the concatenated output of the all transform functions, illustrated in Figure~\ref{fig:global_mask}.

Given input $\bbx \in \bbX$, we learn a set of local feature importance masks $\{\bbm_{loc}^{(f)}\}_{f\in \bbF}$, one per transform function. We subsequently obtain a weighted mask $\bbm^{(f)}$ for each $\bbm_{loc}^{(f)}$, by first applying softmax, and then discarding all but the most important, i.e. highest-valued, entries to $\bbm_{loc}^{(f)}$. Next, the weighted masks $\bbm^{(f)}$ are applied to $\bbx$ elementwise: $\bbx^{(f)}_{loc} =\bbm^{(f)} \odot \bbx$. 
Since $\bbm_{feat}$ are sparse, $\bbm_{feat} \odot \bbx$ are also sparse,
and would contain only features that are learned to be most applicable for a given transform. $\bbm^{(f)}$ are continuous probability vectors -- thus, the space of feature-transform pairs is continuously explored, without explicitly realizing each feature-transform combination.

In summary, the feature importance masks based transformation with input $\bbX$, learned (unnormalized) mask $\bbm$, and top $h$ features selected for the transform $f$ consist of:
\begin{align*}
    I_h &:= \textrm{top\_h}(\bbm)  &\triangleleft \text{Select indices of top $h$ elements.} \\
    \bbm_{I_h} &= \textrm{Select}(\bbm, I_h) &\triangleleft \text{Select top $h$ elements.}\\
    \bbm &= \textrm{Softmax}(\bbm_{I_h})    &\triangleleft \text{Normalize those top $h$ entries. } \\
    \bbX_{I_h} &= \textrm{Select}(\bbX, I_h) &\triangleleft \text{Select top $h$ features.}\\
    \bbX &= \bbX_{I_h} \odot \bbm    &\triangleleft \text{Weigh top features.}\\
    \bbX &= f(\bbX)     &\triangleleft \text{Apply transform.}
\end{align*}

Subsequently, all transformed features $\{\bbx_{loc}^{(f)}\}_{f\in \bbF} $ are concatenated, and a global feature importance mask $\bbm_{glb}$ is learned and applied to the global concatenated set of features, exactly the same way as the local masks $\bbm_{loc}$, to select the most relevant feature-transform combinations.  

In other words, the local masks  $\{\bbm_{loc}^{(f)}\}_{f\in \bbF}$ select which features to input for each transform, and the global mask $\bbm_{glb} $ is learned across all feature-transforms. 

\begin{figure}
\centering
\includegraphics[width=0.4\linewidth]{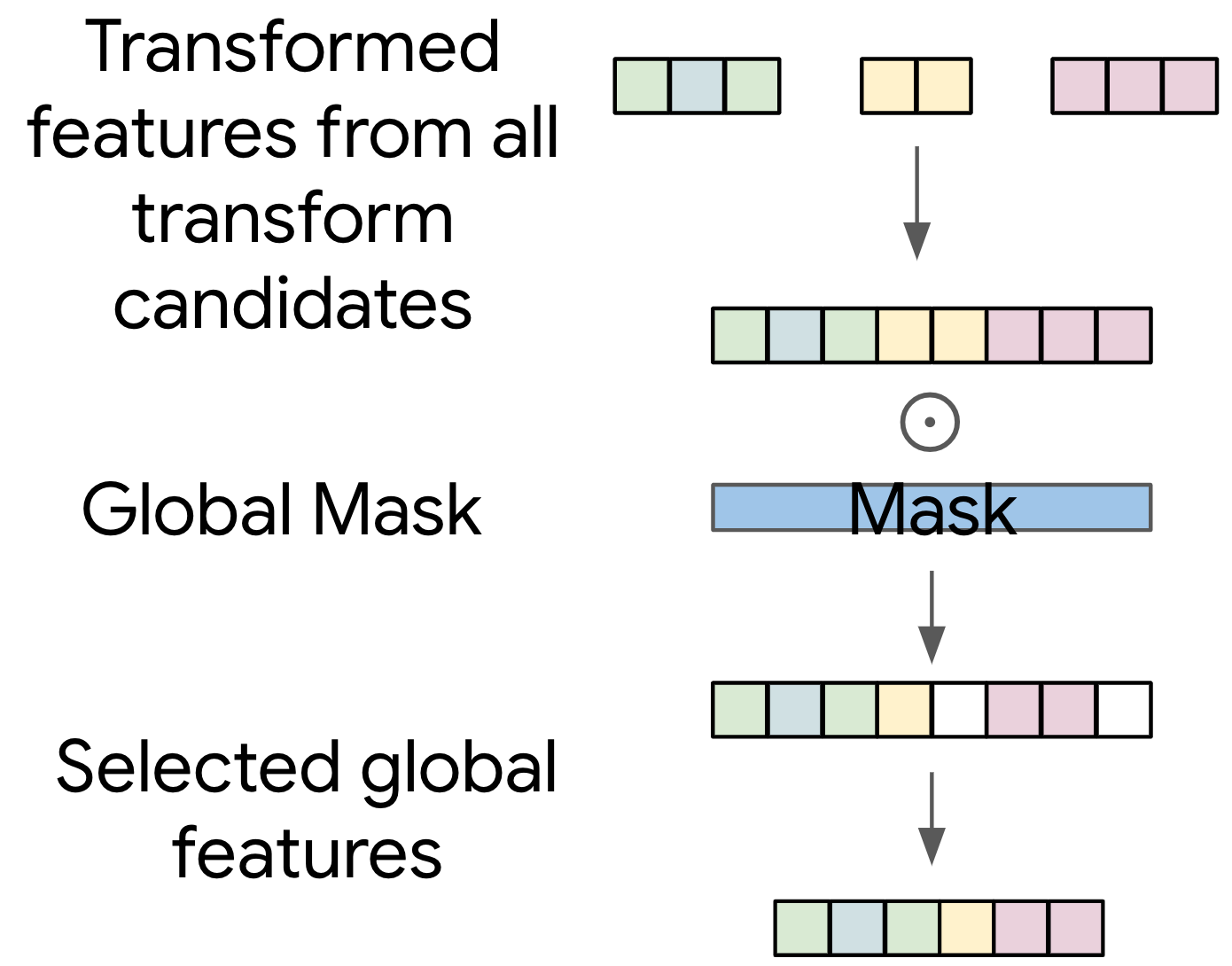}
\caption{Using a global mask to select the global features from the set of transformed feature candidates, optimized for the downstream task performance. Note that since the number of top selected elements for each local and global mask is fixed a priori, the input to the predictor does not vary in size during training.}
\label{fig:global_mask}
\end{figure}

\begin{algorithm}[tb]
   \caption{Feature engineering using \ours{}.}
   \label{alg:afe}
\begin{algorithmic}
   \STATE {\bfseries Input:} input data $\bbX$, labels $\bm{Y}$, $k$ transform function candidates $\{f_i\}$, number of training steps $T$, top $p$ selected features per feature mask.
   \STATE {\bfseries Output:} Engineered features.
   \STATE {\bfseries Initialize:} Learnable local and global masks $\{\bbm_{loc}\}_{i=1}^k$ and $\bbm_{glb}$; initialize uniformly randomly.
   \FOR{step 1 {\bfseries to} $T$} 
   \FOR{$f$ {\bfseries in} $\{f_i\} $} 
   \STATE Generate {\bfseries mask} $\bbm^{(f)}$ by applying softmax on $\bbm_{loc}^{(f)}$, and discarding non-top-$p$ entries in the output.
   \STATE {\bfseries Select} and {\bfseries weight} input features using mask: $\mathbf{X}_{loc}^{(f)} = \mathbf{X} \odot \bbm^{(f)}$.
   \STATE Apply transform: $\bbX^{(f)} \leftarrow f\left(\mathbf{X}_{loc}^{(f)}\right)$. 
   \ENDFOR
   \STATE Concatenate $\{\mathbf{X}^{(f)}\}$ to generate $\bbX_{glb}$.
   \STATE Append $\bbX_{tmp}$ output from Algorithm~\ref{alg:timeseries} to $\bbX_{glb}$ for time series features.
   \STATE $\hat{\bbX} \leftarrow \bbm_{glb} \odot \bbX_{glb}$ . 
   \STATE Input $\hat{\bbX}$ into MLP task predictor and compute loss against ground truth $\bm{Y}$.
\ENDFOR
\STATE Output transformed features $\hat{\bbX}$.
\end{algorithmic}
\end{algorithm}

\begin{algorithm}[tb]
   \caption{Extension of feature engineering with \ours{} for time series.}
   \label{alg:timeseries}
\begin{algorithmic}
   \STATE {\bfseries Input:} input time series data $\bbX$, $k$ temporal transform function candidates $\{f_t\}_{t=1}^k$, top $p$ selected features per feature mask. 
   \STATE {\bfseries Output:} Engineered time series features $\bbX_{tmp}$ for the current iteration.
   \STATE {\bfseries Initialize:} Learnable temporal masks $\{\bbm_{t}\}_{t=1}^k$ uniformly randomly.
   \FOR{$f$ {\bfseries in} $\{f_t\}_{t=1}^k$} 
   \STATE Generate {\bfseries mask} $\bbm^{(f)}$ by applying softmax on $\bbm_{t}^{(f)}$, and discarding non-top-$p$ entries in the output.
   \STATE {\bfseries Select} and {\bfseries weight} input features with mask: $\mathbf{X}_{t}^{(f)} = \mathbf{X} \odot \bbm^{(f)}$.
   \STATE Apply temporal transform: $\bbX_t \leftarrow$  $f(\mathbf{X}_{t}^{(f)})$. 
   \ENDFOR
   \STATE $\bbX_{tmp} \leftarrow$ Concatenate $\{\mathbf{X}_{t}\}_{t=1}^k$. 
   \STATE Output transformed time series features $\bbX_{tmp}$. 
\end{algorithmic}
\end{algorithm}

The use of masking on the transform inputs turns discrete search space for feature transforms to a continuous one. Specifically, the model learns the probability of applying a given transform to an input feature. This allows a continuous exploration of which subset of features to apply a transform to, \textit{without the need} to explicitly realize every discrete subset of features a transform should be applied to.
In addition, such a continuous exploration also eases optimization and training convergence.

The probability masks also act as a normalization to the input features, which can mitigate distribution shift across batches and faciliate training convergence \citep{ba2016layer}.

Finally, the local and global masks are learned with the cross entropy loss for classification tasks, and the mean absolute error loss for regression tasks.

\subsection{Extending feature discovery to time series}

We extend the capabilities of \ours{} to time series data, by proposing learning temporal feature engineering via temporal masks.
\ours{} can be readily adapted to time series data, by learning temporal masks along the time dimension, selecting the most relevant time steps for a given transform. For instance, for the operator that selects data from a particular past time step, such as when the sales data one week before the current time point is most indicative of current sales, a temporal mask over historical values can learn to select the time steps most relevant to the task target. Additional learnable transforms can include aggregations and differencing along the temporal dimension. \S\ref{sec:transforms} details all time series transformations. Algorithm~\ref{alg:timeseries} overviews the discovery process for time series.
The ease of learning such temporal masks indicates that we can readily integrate learning time series transforms within the \textit{same} end-to-end framework as for other data types, which is useful as many real world datasets might contain multiple modalities per sample.

\subsection{Transform functions explored}
\label{sec:transforms}
\ours{} explores a limited, but diverse, array of transformations. This list includes each function's learned components. The target number of selected features by the local mask to apply each transform on depends on the compute resources available. In our experiments, we set the number of selected features to be 5, as a general-purpose quantity that works well empirically, and to reduce hyperparameter tuning. The same set of transform candidates is used for all tasks.

\begin{itemize}
    \item \textit{Polynomial transforms}: learnable coefficient and degree for each monomial. 
    \item \textit{Logarithm}: natural logarithm.
    \item \textit{Custom z-scaling}: the multiplier for division and mean for subtraction are both learned.
    \item \textit{Additive aggregation}: the features that have been selected by the learnable local mask are added for each sample.
    \item \textit{Multiplicative aggregation}: the features that have been selected by the learnable local mask are multiplied for each sample.
    \item \textit{Gaussian}: learnable mean and standard deviation parameters.
    \item \textit{Quantile transform}: the data are bucketed into four quantiles.
    \item \textit{GroupBy}: aggregation based on categorical features. The categories to aggregate based on is learned.
    \item \textit{Aggregation along temporal dimension}: for time series.
    \item \textit{Temporal standard normalization}: for time series.
    \item \textit{Temporal differencing}: difference between adjacent steps.
    \item \textit{Temporal lag}: learnable lag along the temporal dimension.
    \item \textit{Relative temporal mean}: mean of the normalized time series.
    \item \textit{Temporal difference}: difference by specified number of steps.
    \item \textit{Temporal mean}: mean of the (unnormalized) time series.
    \item \textit{Identity}: this identity transform allows important raw features to be used for the downstream task.
\end{itemize}

The selection of this human-expert-curated set of transform functions is  motivated by their applicability in real world use cases. We note that \ours{} is agnostic to this function space, and additional functions can be added. 

\subsection{Complexity analysis}
\label{sec:complexity}
Let $N$ be the number of training samples, $m$ the initial number of features, and $k$ the number of functions in the initial set of transforms. 
\ours's overall complexity is $O(Nmk)$, as one mask is learned per transform function for all features.
Notably, \ours{} scales linearly with respect to the number of initial features $m$, as opposed to quadratic in $m$ as in some prior works \citep{openfe}. In addition, feature importance masking allows \ours{} to scale linearly with the number of transform functions $k$, demonstrating effective search in the large candidate space.

Table~\ref{tab:timing} shows runtime comparison on datasets of varying numbers of features. As shown, while the timing results are similar across methods on the Mice dataset with 77 features, the timing difference drastically widens on the Isolet dataset with 617 features. This is consistent with the timing complexity in theory. 
Settings such as the number of paralell processes were experimented with on baseline methods to obtain the best latency results.

\section{Motivation for limiting transform functions}
\label{sec:expressiveness}

A guiding principle in \ours{}'s design is exploration efficiency for the transformation space, which consists of a pre-determined set of human expert curated transform functions. 
The overall motivation is that a representative subset of transform functions suffices to learn expressive feature transformations for many real world tasks. Intuitively, learning a set of ``basis functions``, such as Gaussians, in the transform space suffices to span the relevant transform space. We make this intuition rigorous in Lemma~\ref{lem:basis-function}.

\begin{lemma}
\label{lem:basis-function}
Given a continuous function $g: [-N, N]^n\to \Rbb$, for any $N > 0$, and any $\epsilon > 0$, there exists a finite sequence of reals $c_i$, and a finite sequence of Gaussian functions $f_i$, such that $|g(x) - \sum_i c_i f_i (x)| < \epsilon$ for all $x\in [-N, N]^n$.
\end{lemma}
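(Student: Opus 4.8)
The plan is to recognize the claim as a uniform-approximation (density) statement on the compact cube $K := [-N,N]^n$ and to establish it via the Stone--Weierstrass theorem. Let $\mathcal{G}$ denote the collection of isotropic Gaussian functions $x \mapsto \exp(-a\|x-\mu\|^2)$ with $a > 0$ and $\mu \in \Rbb^n$, and let $\mathcal{A} := \operatorname{span}(\mathcal{G})$ be the set of all finite real linear combinations $\sum_i c_i f_i$ with $f_i \in \mathcal{G}$. Each such combination is exactly an object of the form appearing in the statement, so it suffices to show that $\mathcal{A}$ is dense in $C(K,\Rbb)$ under the uniform norm; density then yields, for the given $g$ and $\epsilon$, an element $\sum_i c_i f_i$ with $\sup_{x\in K}\bigl|g(x) - \sum_i c_i f_i(x)\bigr| < \epsilon$, which is the assertion.

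To apply Stone--Weierstrass I would verify three properties of $\mathcal{A}$ as a subspace of $C(K,\Rbb)$. First, $\mathcal{A}$ is a subalgebra: it is plainly closed under addition and scalar multiplication, and closure under multiplication reduces, by bilinearity, to showing that a product of two Gaussians lies in $\mathcal{A}$. Expanding the exponent of $\exp(-a\|x-\mu_1\|^2)\exp(-b\|x-\mu_2\|^2)$ and completing the square gives
\[
 \exp\bigl(-a\|x-\mu_1\|^2\bigr)\,\exp\bigl(-b\|x-\mu_2\|^2\bigr) = C\,\exp\bigl(-(a+b)\,\|x-\mu_3\|^2\bigr),
\]
where $\mu_3 = (a\mu_1+b\mu_2)/(a+b)$ and $C>0$ is a constant; since $a+b>0$, the right-hand side is again a scalar multiple of an isotropic Gaussian, so the product lies in $\mathcal{A}$. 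Second, $\mathcal{A}$ separates points: for distinct $x,y \in K$, the Gaussian centered at $x$ takes the value $1$ at $x$ and a value strictly less than $1$ at $y$. Third, $\mathcal{A}$ vanishes nowhere, since every Gaussian is strictly positive.

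With these in hand I would invoke the non-unital form of the Stone--Weierstrass theorem: a subalgebra of $C(K,\Rbb)$ that separates points and vanishes at no point of $K$ is dense in the uniform norm. (Note that $\mathcal{A}$ does not contain the constant functions, so the version requiring a multiplicative unit does not apply; this is precisely why the ``vanishes nowhere'' hypothesis is the relevant one.) I expect the main obstacle to be the bookkeeping in the product-closure step: ensuring the completed square stays \emph{isotropic} — which it does because both factors have scalar covariance, so their quadratic forms add to another scalar multiple of the identity — and tracking that the constant $C$, together with any normalization constants in the definition of a Gaussian, can be absorbed into the coefficients $c_i$. An alternative route, should one prefer to avoid Stone--Weierstrass, is to extend $g$ continuously to a compactly supported function $\tilde g$ on $\Rbb^n$ via Tietze, approximate $\tilde g$ by a Gaussian mollification $\tilde g * G_\sigma$ uniformly on $K$ for small $\sigma$ (using uniform continuity of $\tilde g$), and then replace the convolution integral by a Riemann sum $\sum_i \tilde g(y_i)\,G_\sigma(\,\cdot\, - y_i)\,\Delta V$ over a fine grid of points $y_i$; there the delicate point is controlling the Riemann-sum error \emph{uniformly} in $x \in K$, which follows from uniform continuity of the integrand $(x,y)\mapsto \tilde g(y)G_\sigma(x-y)$ on the relevant compact set.
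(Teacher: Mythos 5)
Your proposal is correct and takes essentially the same route as the paper: both invoke Stone--Weierstrass after checking that Gaussians generate a subalgebra that separates points and vanishes nowhere (you use the non-unital compact version on $[-N,N]^n$ directly, while the paper uses the locally compact version on $C_0(\Rbb^n,\Rbb)$ and restricts to the cube). If anything, your treatment is the more careful one: you correctly work with the linear \emph{span} of Gaussians and verify closure under products by completing the square, whereas the paper's auxiliary lemma asserts that the set of Gaussians itself is an algebra because ``the product or sum of two Gaussians is a Gaussian'' --- which fails for sums and is repaired exactly by passing to the span as you do.
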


\begin{proof}
We apply the generalized Stone-Weierstrass Theorem \citep{Stone_Ross_2007} to locally compact spaces (stated in Appendix \S\ref{sec:stone_weierstrass}). We first check that all its conditions are satisfied.

Let $C_0(\Rbb^n, \Rbb)$ be the space of real-valued continuous functions $\Rbb^n\to \Rbb$ that vanish at infinity. The set of continuous functions $g: [-N, N]^n\to \Rbb$ coincides with elements of $C_0(\Rbb^n, \Rbb)$ restricted to the domain $[-N, N]^n$. 

By Lemma~\ref{claim:algebra}, the set of Gaussian functions on $[-N, N]^n$ form an algebra $A$ that vanishes nowhere and separates points. 
Therefore, since the space of continuous functions $g: [-N, N]^n\to \Rbb$ coincides with $C_0(\Rbb^n, \Rbb)$ on $[-N, N]^n$, and the algebea $A$ is dense in $C_0(\Rbb^n, \Rbb)$ by the locally compact version of the Stone-Weierstrass Theorem. Furthermore, $A$ is dense in the space of continuous functions $g: [-N, N]^n\to \Rbb$ in the topology of \textit{uniform convergence}. In other words, given any $\epsilon > 0$, there exists a finite sum of Gaussians $\sum_i c_i f_i (x)$ such that $|g(x) - \sum_i c_i f_i (x)| < \epsilon$ for all $x\in [-N, N]^n$.
\end{proof}

The following lemma, proved in Appendix~\S\ref{sec:appendix}, is used in the proof above:
\begin{lemma}
\label{claim:algebra}
The set of Gaussian functions on $\Rbb^n$ form an associative algebra $A$. $A$ vanishes nowhere and separates points.
\end{lemma}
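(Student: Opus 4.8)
The plan is to first pin down the precise meaning of ``the set of Gaussian functions forms an algebra,'' since the pointwise sum of two Gaussians is not itself a Gaussian. I would take $A$ to be the set of all finite real linear combinations of Gaussians $f_{\mu,a}(x) = \exp(-a\|x-\mu\|^2)$ with $\mu \in \Rbb^n$ and $a > 0$; equivalently, $A$ is the associative algebra generated by the Gaussians under pointwise operations. With this definition, closure under addition and scalar multiplication holds by construction, and associativity (indeed commutativity) and bilinearity of multiplication are inherited from pointwise multiplication of real-valued functions. So the only substantive algebraic claim is that $A$ is closed under pointwise multiplication.

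The key computation, and the main point to get right, is that the product of two Gaussians is again a scalar multiple of a Gaussian, hence lies in $A$. Writing out $f_{\mu,a} \cdot f_{\nu,b}$, the exponent is $-a\|x-\mu\|^2 - b\|x-\nu\|^2$, a quadratic in $x$ with leading coefficient $-(a+b)$. Completing the square yields $f_{\mu,a}(x)\, f_{\nu,b}(x) = C\, \exp(-(a+b)\|x - \tfrac{a\mu + b\nu}{a+b}\|^2)$ for a constant $C$ depending only on $a, b, \mu, \nu$, i.e. $C\, f_{\tilde\mu,\, a+b}$ with $\tilde\mu = \tfrac{a\mu+b\nu}{a+b}$, an element of $A$. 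By bilinearity the product of any two finite linear combinations of Gaussians is a finite linear combination of such products, hence again in $A$. The same completing-the-square argument works verbatim for anisotropic Gaussians built from positive-definite quadratic forms, since the sum of two positive-definite forms is positive-definite, but the isotropic family already suffices for what follows.

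Next I would verify the two conditions needed for the application of Stone--Weierstrass. Vanishing nowhere is immediate: every Gaussian is strictly positive on all of $\Rbb^n$, so for any $x$ the element $f_{0,1} \in A$ satisfies $f_{0,1}(x) = \exp(-\|x\|^2) > 0$. For separation of points, given $x \neq y$ I would exhibit a single Gaussian distinguishing them: $f_{x,1}(z) = \exp(-\|z - x\|^2)$ gives $f_{x,1}(x) = 1$ while $f_{x,1}(y) = \exp(-\|y-x\|^2) < 1$, so $f_{x,1}(x) \neq f_{x,1}(y)$.

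I expect the main obstacle to be conceptual rather than computational: one must be careful that ``the set of Gaussian functions'' has to be replaced by its linear span (equivalently, the generated algebra) for the algebra axioms to make sense, and then check that this span is genuinely closed under multiplication via the completing-the-square identity above. It is also worth recording, for the application in Lemma~\ref{lem:basis-function}, that every element of $A$ vanishes at infinity, so $A \subseteq C_0(\Rbb^n, \Rbb)$; this places $A$ in the setting of the non-unital ($C_0$) Stone--Weierstrass theorem, where ``vanishes nowhere'' is precisely the hypothesis that replaces ``contains the constants.''
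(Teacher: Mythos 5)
Your proposal is correct, and it in fact repairs a flaw in the paper's own argument. The paper justifies the algebra structure by asserting that ``the product or sum of two Gaussians on $\Rbb^n$ is a Gaussian,'' which is false for sums: $e^{-\|x\|^2} + e^{-\|x-\mu\|^2}$ is not a Gaussian for $\mu \neq 0$. You correctly diagnose this, replace the bare set of Gaussians by its linear span (the generated algebra), and supply the one nontrivial verification --- closure under pointwise multiplication --- via the completing-the-square identity showing that a product of two Gaussians is a scalar multiple of a Gaussian. Your checks of the two Stone--Weierstrass hypotheses are also tighter than the paper's: for vanishing nowhere the paper says only that ``a Gaussian function can be defined at any point,'' whereas the relevant fact is your observation that Gaussians are strictly positive everywhere; for separation of points the paper's ``let $f$ be a Gaussian centered at $x$'' leaves implicit exactly why $f(x) \neq f(y)$, which you make explicit ($f$ attains its unique maximum $1$ at its center). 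Your closing remark that every element of $A$ vanishes at infinity, so $A \subseteq C_0(\Rbb^n,\Rbb)$, is a hypothesis of the non-unital Stone--Weierstrass theorem that the paper uses but never verifies. In short: same overall strategy, but your version is the one that actually proves the statement as it needs to be read for Lemma~\ref{lem:basis-function} to go through.
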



Lemma~\ref{lem:basis-function} states that, in theory, just \textit{one} particular type of functions, namely Gaussian functions, suffices to approximate any continuous target transforms well, where target transforms refer to any set of transform functions that achieve optimal downstream task performance given the model capacity.  Note that while Lemma~\ref{lem:basis-function} applies to Gaussian approximations of continuous transforms, and in many real world applications discrete functions are often used for feature transformation. This nonetheless gives reason to believe that exhaustively searching a large function space is not necessary for finding effective feature transforms.


In practice, \ours{} is applied to a small family of learnable functions besides Gaussians, detailed in \S\ref{sec:transforms}, to reduce the total number of functions needed to approximate the target transforms well, and to include discrete transformations not easily approximated by a few continuous functions. For example, we consider polynomial transforms, where the degrees and coefficients of the polynomial can be learned end-to-end, as well as aggregations grouped by categorical features. Experiments show that in practice, a small number of learnable functions from few different families, work well to achieve the state-of-the-art results.
Note that this motivation leads us to a markedly different approach from prior works such as \cite{openfe} or \cite{fetch}, which explore an exhaustively large space of candidate functions and their compositions, and focuses on exploring this space efficiently.


\section{Experiments and Results}

\subsection{Experiments} We experiment the effectiveness of \ours{} for a variety of prediction tasks, compared to recent competitive baselines OpenFE \citep{openfe} and AutoFeat \citep{autofeat}. For fair comparison, we apply the same pipeline across all methods -- each method learns a set of feature transform functions on the training set. These learned transform functions are then applied to both the train and test sets. 

We evaluate the performance of the predictor models on the transformed dataset. 
For each experiment, we fix the same predictor for all baseline methods. The goal is to use a simple, commonly-used predictor to test the effectiveness of the generated features. The same predictor is used across all baselines. The predictor is deliberately kept simple, to make the feature generator as the main performance driver, rather than the predictor.

We experiment with two commonly-used types of predictors: a two-layer MLP and XGBoost, to test the efficacy of the generated features independent of the downstream predictor. The reported performance is averaged across ten trials, each with different random seeds for MLP, and different randomly generated seeds, tree depths, and ensemble sizes for XGBoost.
This is done to avoid the scenario where a single configuration might disproportionately benefit a particular feature engineering method. We do not restrict the number of features generated for any baseline method, to leverage each method to the fullest extent.

In addition, we compare the performance on the discovered features against the performance on the original dataset. We perform this comparison on four datasets of varying numbers of samples and features: Mice, Isolet, MNIST, and Fraud. Furthermore, we compare the latency performance across all methods on two datasets of varying scales.

\subsection{Datasets} We experiment with a variety of datasets, with varying number and type of features, to test the efficacy of the feature engineering methods. This includes Isolet, a speech dataset; Diabetes and Heart Disease prediction datasets, both datasets predicting medical conditions; Fraud, a finance dataset on sales transactions; MNIST, an image dataset; Coil, an image dataset; Mice, a protein expression dataset; and M5, a time series dataset in retail.
Table~\ref{tab:data-char} 
overviews the dataset statistics.

The Mice dataset consists of protein expression levels measured in the cortex of normal and trisomic mice who had been exposed to different experimental conditions. Each feature is the expression level of one protein.
Isolet consists of preprocessed speech data of people speaking the names of the letters in the English alphabet with each feature being one of the preprocessed quantities, including spectral coefficients and sonorant features.
Coil consists of centered gray-scale images of 20 objects taken at certain pose intervals, hence the features are image pixels.
Activity consists of sensor data collected from
a smartphone mounted on subjects while they performed several activities such as walking or standing. MNIST consists of 28-by-28 grayscale images of hand-written digits and clothing items. Each pixel is treated as a separate feature.
M5 \citep{m5-forecasting-accuracy} consists of sales data from the retailer Walmart for the task of forecasting daily sales.
In addition, we consider the Ames housing dataset \citep{amesDataset}, with the goal of predicting residential housing prices based on each home's features; as well as the IEEE-CIS Fraud Detection dataset \citep{kaggle_fraud}, with the goal of identifying fraudulent transactions from numerous transaction and identity dependent features.

\begin{table}
\caption{Attributes of datasets used in experiments. 
}
\centering
\begin{tabular}{lccc}
\toprule
\textbf{Dataset}       & \textbf{\# samples} & \textbf{\# features} & \textbf{\# classes} \\ \midrule 
Mice          & 1080              & 77                 & 8                        \\ 
Isolet        & 7797              & 617                & 26                       \\ 
Coil-20       & 1440              & 400                & 20                       \\ 
Ames      & 1460              & 81                & N/A                        \\ 
Diabetes      & 768              & 9                & 2                        \\ 
Heart Disease      & 270              & 14                & 2                       \\ 
Fraud      & 100000              & 681                & 2                        \\ 
MNIST         & 10000             & 784                & 10                       \\ 
\bottomrule
\end{tabular}
\label{tab:data-char}
\end{table}

\subsection{Results}
On datasets across a variety of domains, \ours{} consistently outperforms other baselines. Table~\ref{tab:baseline-mlp} shows the prediction results using engineered features with the MLP model as the predictor, and Table~\ref{tab:baseline-xgb} shows the results with the XGBoost model as the predictor. 

\ours's superior performance is particularly prominent for the MLP predictor. This is aligned with the training of the feature engineering method, underlining the advantage of end-to-end learning. 

\begin{table}[t]
\caption{Comparison against baselines on various datasets with the \textit{MLP predictor}. Each feature engineering method is trained to discover dataset-specific new features, then a deliberately simple MLP predictor is trained on top of the transformed dataset with the new features. This is repeated ten times with different randomly selected MLP configurations, to avoid the scenario where a single configuration might disproportionately benefit a particular feature engineering method. We do not restrict the number of features generated for any baseline method. As shown, \ours{} outperforms competitive baselines, demonstrating its advantages in end-to-end feature engineering.}
\label{tab:baseline-mlp}
\vskip 0.15in
\begin{center}
\begin{small}
\begin{sc}
\resizebox{.75\linewidth}{!}{
\begin{tabular}{lccccr}
\toprule
Dataset & \ours{} (ours) & OpenFE & AutoFeat  \\
\midrule
Diabetes $\uparrow$ & \textbf{88.4}$\pm$ 1.8& 82.3$\pm$ 1.3 & 72.55$\pm$ 1.7\\
Isolet $\uparrow$ & \textbf{94.8}$\pm$ 0.9& \textbf{94.4}$\pm$ 1.2 & 93.1$\pm$ 2.1 \\
Mice $\uparrow$& \textbf{97.7}$\pm$ 1.1& 93.1$\pm$ 1.6 & 87.04 $\pm$ 1.8 \\
Heart disease $\uparrow$& \textbf{92.5}$\pm$ 1.7&  84.4$\pm$ 1.6 & 78.2 $\pm$ 5.3\\
Coil $\uparrow$& \textbf{98.9} $\pm$ 1.7 & 85.8$\pm$ 1.2 & \textbf{99.3}$\pm$1.4 \\
Ames $\downarrow$& \textbf{0.74} $\pm$ 0.08 &  1.07 $\pm$ 0.09 & 1.268 $\pm$ 0.15 \\
\bottomrule
\end{tabular}
}
\end{sc}
\end{small}
\end{center}
\end{table}

\begin{table}[t]
\caption{Baseline comparisons with the \textit{XGBoost predictor}. 
This is done in addition to the MLP predictor, to test the efficacy of the generated features independent of the downstream predictor.
}
\label{tab:baseline-xgb}
\vskip 0.15in
\begin{center}
\begin{small}
\begin{sc}
\resizebox{.75\linewidth}{!}{
\begin{tabular}{lccccr}
\toprule
Dataset & \ours{} (ours) & OpenFE & AutoFeat  \\
\midrule
Diabetes $\uparrow$ & \textbf{77.7}$\pm$ 2.3& 74.3$\pm$ 4.3 & 72.8 $\pm$ 2.2\\
Isolet $\uparrow$ & \textbf{91.8}$\pm$ 0.6& 90.5$\pm$ 1.2 & 90.6 $\pm$ 1.1 \\
Mice $\uparrow$ & \textbf{95.7}$\pm$ 2.1& 94.6 $\pm$1.8 & 90.8 $\pm$ 3.8 \\
Heart disease $\uparrow$ & 80.8$\pm$ 5.4& \textbf{81.4}$\pm$ 5.3 & 79.3 $\pm$ 6.1 \\
Coil $\uparrow$ & 96.1 $\pm$ 1.5 & \textbf{97.1} $\pm$ 1.3 & 96.7 $\pm$ 1.1  \\
Ames $\downarrow$ & \textbf{0.22} $\pm$ 0.03 &  0.31 $\pm$ 0.09 & 0.30 $\pm$ 0.09 \\
\bottomrule
\end{tabular}
}
\end{sc}
\end{small}
\end{center}
\end{table}

Table~\ref{tab:timing} shows the latency on two datasets with varying numbers of features. As described in Sec.~\ref{sec:complexity}, the masking mechanism means \ours{}  scales linearly with respect to the number of initial features, and a pre-determined human-expert-curated set of transform functions greatly improves the efficiency of the transform search space. As a result, \ours{} demonstrates superior or on-par performance compared to baselines despite notably lower latency.

Table~\ref{tab:m5} shows the regression mean absolute error on the time series dataset M5, benchmarked with the MLP predictor. \ours{} is able to preserve the temporal relations in the time series features, by learning feature importance masks along the temporal dimension.

\begin{table}
\caption{Runtime values (in GPU-minutes) on datasets with varying numbers of features (indicated in parentheses). Benchmarking is done on one V100 GPU with a 2.20GHz 32-core processor. \ours{} scales linearly with respect to the number of input features given the linear scalability of learning importance masks. The pre-determined human-expert-curated set of transform functions greatly improves the search in the transform space. While the runtime on Mice are comparable, the time saving significantly grows for Isolet, a dataset with an order of magnitude larger number of features. This is consistent with the theoretical complexity discussed. A $\times$ indicates timeout, exceeding the two-day limit.}
\label{tab:timing}
\vskip 0.15in
\begin{center}
\begin{small}
\begin{sc}
\resizebox{.65\linewidth}{!}{
\begin{tabular}{lccccr}
\toprule
Dataset & \ours{} (ours) & OpenFE & AutoFeat  \\
\midrule
Mice (77) & \textbf{0.38}& 2.3 & 4.7  \\
Isolet (617) & \textbf{4.60}& 2392 & $\times$  \\
\bottomrule
\end{tabular}
}
\end{sc}
\end{small}
\end{center}
\vskip -0.1in
\end{table}

\begin{table}[t]
\caption{MAE results on the time series dataset M5, using an MLP predictor. \ours{} is able to natively ingest time series data, discoverying meaningful feature transformations along the temporal dimension, such as time lag or timewise aggregation. This helps preserve temporal relations within the data during feature discovery.}
\label{tab:m5}
\vskip 0.15in
\begin{center}
\begin{small}
\begin{sc}
\resizebox{.7\linewidth}{!}{
\begin{tabular}{lccccr}
\toprule
Dataset & \ours{} (ours) & OpenFE & AutoFeat  \\
\midrule
M5 $\downarrow$ & \textbf{0.870}$\pm$ 0.05 & 0.966 $\pm$ 0.07 & 1.146$\pm$ 0.13  \\
\bottomrule
\end{tabular}
}
\end{sc}
\end{small}
\end{center}
\vskip -0.1in
\end{table}

\begin{figure}
\centering
\includegraphics[width=.9\linewidth]{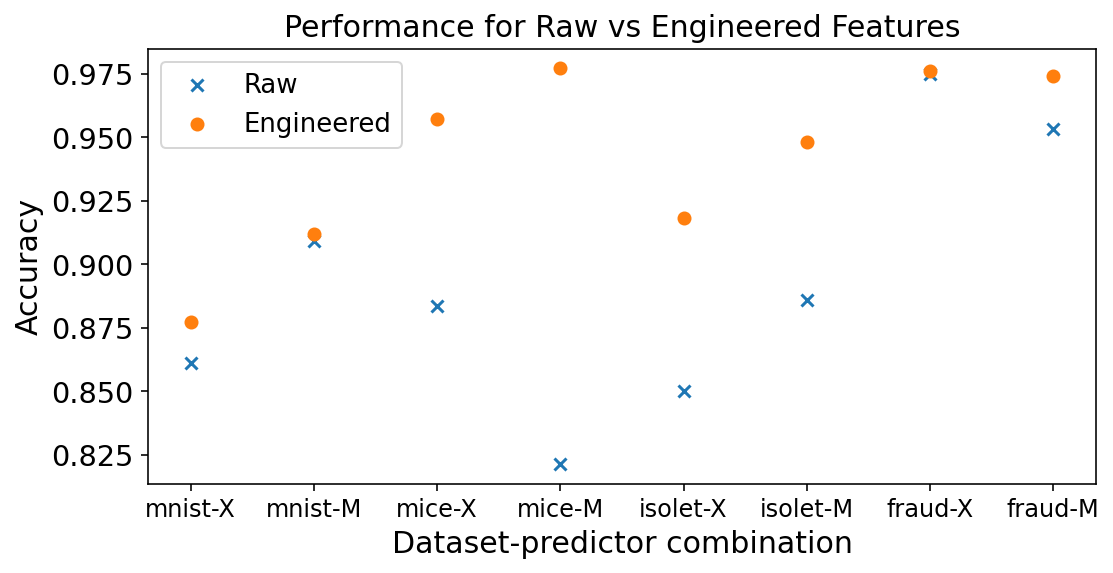}
\caption{Performance comparisons between raw and \ours{} engineered features. Higher is better. MLP (indicated by "M" postfix) and XGBoost  (indicated by "X" postfix) predictors are trained on either the raw or engineered features. Interestingly, the performance gain is larger for smaller datasets with fewer samples and features (Mice and Isolet) than for larger datasets (MNIST and Fraud).} \label{fig:raw_eng_feat_comp}
\end{figure}

\paragraph{Performance improvement with engineered features.} Figure~\ref{fig:raw_eng_feat_comp} shows the average performance improvement given the \ours{} engineered features over raw features.
The task performance with the engineered features either are on par or 
for both predictors.
Perhaps not surprisingly, the improvement is larger for the MLP predictor than for the XGBoost predictor, as \ours{} is trained end-to-end with an MLP predictor, hence the engineered feature space is customized for an MLP. Interestingly, the gain is consistently larger for the smaller datasets than for the larger datasets. For instance, the performance gain with engineered features on the Mice dataset, with 1080 samples and 77 features, is 7.5X the performance gain than on the MNIST, with 10,000 samples and 784 features. One hypothesis for this difference is that the feature space for the larger datasets with more samples and features is already saturated in the original input, whereas data augmentation complements the original feature space for smaller datasets with fewer samples and features. We leave this interesting direction to future work. 

\paragraph{Qualitative analysis on engineered features.} 
While there is not a single optimal set of engineered features with respect to task performance (for instance different numbers of additional features allowed would result in different optimal sets of engineered features), some \ours{} engineered features would exhibit common sense. For instance, on the M5 dataset, for predicting the future sales volume, the engineered features \textit{TemporalMean(sales)}, referring to the average past sales quantity, \textit{TemporalLag(sales, 2)}, referring to the sales quantity two timesteps ago, as well as \textit{QuantileTransform(sell\_price)}, referring to a quantile transform of the sell price, were among the top generated features. 
Similarly, for the Ames housing price dataset, quantile transforms of square footage features are among top learned transforms.
We note that as \ours{} optimizes for the downstream objective directly, not all generated features are human interpretable, e.g. the generated feature \textit{PolynomialTransform(pixel\_73)} does not lend itself to much interpretation.

\section{Related work}

Many existing works on automated feature engineering employ variations of the expansion-reduction framework, which first produces a large candidate features set, and subsequently selects the most salient features from the candidate pool based on their effects on model performance \citep{kaul_afe, lam_afe}. 
OpenFE first generates candidate features by applying hand-selected transform operators on the raw feature set. It then prunes less effective features via feature boosting and a coarse-to-fine two-stage pruning mechanism. Its feature boosting algorithm incrementally evaluates a feature's effectiveness without retraining the model, though OpenFE still materializes a large candidate features pool. 
FETCH \citep{fetch} uses reinforcement learning to learn a policy neural network that outputs feature engineering actions pertaining to a given dataset, and seeks to achieve transferability across datasets. \citep{wang_reinforce_afe} also employs reinforcement learning towards automated feature engineering, by developing a feature pre-evaluation model to improve sample and feature efficiency and warm-start policy training.
AutoFeat \cite{autofeat} aims to perform feature engineering and selection while retaining interpretability of linear models, it does so by generating a large pool of non-linear features first, and then selecting a subset of the most useful features. Like \ours, AutoFeat also leverages the task objective directly during feature engineering. Neural Feature Search \citep{nas} adopts NAS techniques for automated feature engineering.  
On time series data, \citep{timeseries_afe} models multivariate time series as a cumulative sum of trajectory increments to engineer time series features at scale. 
In contrast to many such works, \ours{} does not explicitly manifest the feature transforms explored, but rather explores a continuous space that contains soft combinations of feature transforms. Thus, it yields effective discovery of features even from large candidate function spaces efficiently.

\section{Discussion}

\paragraph{Feature importance masking.}
\ours{} relies crucially on learned feature importance masks -- local masks to select and weigh the features for each transform function, and global masks to select and weigh across all transformed features. Such mask-based weighting strategy has been employed to great success in deep learning, most notably in the attention-based architectures like Transformers \citep{vaswani}, where attention scores are learned to weigh key, query, and value features generated from the input. \citep{slm} learns sparse masks to select the most important features for a downstream task. Indeed, importance mask learning exhibits numerous advantages, such as ease of optimization, allowing continuous search in feature space even for discrete outcomes, and enabling end-to-end backpropagation using downstream task objectives. To our knowledge, \ours{} is the first to apply mask-based importance weighting to feature engineering and discovery. 

\paragraph{Transform function search space.} 
Much existing work follow the expand-and-select approach, where an expansive set of transforms are incrementally applied to all possible features, and the resulting large pool of transformed features is whittled down based on model performance. Existing works have optimized this process, such as developing incremental training algorithms that do not require re-training between successive feature relevance evaluations \citep{openfe}. 
However, many such optimizations still require manifesting an expansive set of transformed features.
For instance, on the Isolet data with 617 features, \citep{openfe} creates over 1.3 million candidate features, whereas \ours{} has a non-explicit search space that has latent feature dimension 256 at the widest. Here, non-explicit refers to the fact that the full pool of feature transforms is never explicitly manifested, but rather explored continuously via feature importance masks. 


\paragraph{Interpretability of temporal features.}
\ours{} is able to natively ingest time series data, preserving the temporal relations within each series, rather than treating each time step independently. Specifically, \ours{} can discover meaningful feature transformations along the time dimension, such as those based on time lags or timewise aggregations. 
For instance, on the M5 dataset, one discovered feature is \textit{TemporalLag(sales, 2)}, indicating that the feature sales from two time steps before the current one is a relevant feature for the downstream task. The structure of such transforms enabled by the masking design add a layer of interpretability to the discovered features.


\paragraph{Limitations and future work.} While \ours{} has a myriad of advantages with regard to performance and latency, the pursuit for low latency entails certain sacrifices for flexibility. Specifically, since the local masks for different transforms are learned independently, transforms cannot be composed unless specific compositions are included within the initial set of transforms. Nonetheless, experimental results indicate that this constraint does not meaningfully decrease model performance. Furthermore, Lemma~\ref{lem:basis-function} suggests that functions from a single family suffice to approximate arbitrary continuous transforms. 

Currently, the initial set of transforms to be explored include many commonly-used transform functions, to further optimize this set, it would be useful to create an automated way to determine what to include in this set.
Furthermore, \ours{} is shown to be able to discover features for datasets that include time series.
Extensions to other modalities in the same way is an important direction,  e.g. for images, a pixel-wise feature importance mask can be learned for selecting the most pertinent features for a given transform function.

\bibliography{references}
\bibliographystyle{ACM-Reference-Format}

\newpage
\appendix
\onecolumn
\section{Appendix}

\subsection{The Stone-Weierstrass Theorem}
\label{sec:stone_weierstrass}

In \S\ref{sec:expressiveness} we motivate the use of a small set of curated transform functions by applying the generalized Stone-Weierstrass theorem for locally compact spaces. We state that theorem here. As in \S\ref{sec:expressiveness}, let $C_0(\Rbb^n, \Rbb)$ be the space of real-valued continuous functions $\Rbb^n\to \Rbb$ that vanish at infinity.

\begin{theorem}[Generalized Stone–Weierstrass theorem \citep{Stone_Ross_2007}] Let $X$ be a locally compact Hausdorff space and $A$ be a subalgebra of $C_0(X, \Rbb)$. Then A is dense in $C_0(X, \Rbb)$, with the topology of uniform convergence, if and only if it separates points and vanishes nowhere.
\end{theorem}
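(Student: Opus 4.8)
The plan is to prove both implications, dispatching the easy ``only if'' direction quickly and devoting the real work to the ``if'' direction, which I would reduce to the classical compact Stone--Weierstrass theorem (for compact Hausdorff spaces, with hypotheses ``separates points and contains the constants'') via the Alexandroff one-point compactification. I take that compact version as a known result.

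For the ``only if'' direction: if $A$ is dense in $C_0(X,\Rbb)$ in the uniform norm, then since $C_0(X,\Rbb)$ itself separates points and vanishes nowhere (local compactness together with the Urysohn lemma furnishes, for each point, a compactly supported function nonzero there, and for each pair of distinct points a function distinguishing them), a routine $\epsilon/3$ argument transfers both properties to $A$. Given distinct $x,y$, pick $g\in C_0(X,\Rbb)$ with $g(x)\neq g(y)$ and then $f\in A$ with $\|f-g\|_\infty$ below $|g(x)-g(y)|/3$, which forces $f(x)\neq f(y)$; the vanishing-nowhere property follows in the same way.

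For the ``if'' direction, I would first form the one-point compactification $X^+ = X\cup\{\infty\}$, which is compact Hausdorff precisely because $X$ is locally compact Hausdorff, and extend every $f\in C_0(X,\Rbb)$ by $f(\infty)=0$, identifying $C_0(X,\Rbb)$ isometrically with the functions in $C(X^+,\Rbb)$ that vanish at $\infty$. I would then set $\tilde A = \{\,f + c : f\in A,\ c\in\Rbb\,\}$ inside $C(X^+,\Rbb)$; this is a subalgebra because expanding $(f_1+c_1)(f_2+c_2)$ lands in $A+\Rbb$ (using that $A$ is closed under products and scalar multiples), and by construction it contains the constants. The crux is checking that $\tilde A$ separates the points of $X^+$: for two distinct points of $X$ this is inherited from the hypothesis that $A$ separates points, while to separate a point $x\in X$ from $\infty$ I would invoke that $A$ vanishes nowhere, choosing $f\in A$ with $f(x)\neq 0 = f(\infty)$. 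The classical compact Stone--Weierstrass theorem then yields that $\tilde A$ is dense in $C(X^+,\Rbb)$.

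The final step, where the two hypotheses combine most delicately, is descending from density of $\tilde A$ in $C(X^+,\Rbb)$ back to density of $A$ itself in $C_0(X,\Rbb)$, since a priori the approximants carry an additive constant that need not lie in $A$. Given $g\in C_0(X,\Rbb)$ extended by $g(\infty)=0$, I would pick $f+c\in\tilde A$ with $\|f+c-g\|_{\infty,X^+}$ below $\epsilon/2$; evaluating at $\infty$, where $f(\infty)=0$ and $g(\infty)=0$, forces $|c|<\epsilon/2$, and then on $X$ one has $|f-g|\le |f+c-g|+|c|<\epsilon$ with $f\in A$. I expect this constant-control argument to be the main obstacle: it is exactly the vanishing-at-infinity structure of $C_0$, together with the fact that vanishing-nowhere was what let us separate $\infty$ in the first place, that allows the spurious constant to be absorbed and keeps the final approximant genuinely inside $A$ rather than merely inside $\tilde A$.
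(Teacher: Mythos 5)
Your proof is correct, but it is worth noting that the paper itself does not prove this statement at all: the theorem appears in the appendix purely as a cited result (attributed to the Stone--Weierstrass literature) and is used as a black box in the proof of Lemma~3.1, so there is no ``paper proof'' to match against. What you have supplied is the standard textbook derivation of the locally compact version from the compact one, and all the delicate points are handled properly: the ``only if'' direction via the $\epsilon/3$ transfer argument (legitimate, since local compactness plus Urysohn guarantees $C_0(X,\Rbb)$ itself separates points and vanishes nowhere); the verification that $\tilde A = A + \Rbb$ is a subalgebra of $C(X^+,\Rbb)$ containing the constants; the separation of a point $x \in X$ from $\infty$ using precisely the vanishing-nowhere hypothesis (this is exactly where that hypothesis enters, and you identify it as such); and, most importantly, the descent step in which the additive constant $c$ is controlled by evaluating the uniform approximation at $\infty$, where both $f$ and $g$ vanish, giving $|c| < \epsilon/2$ and hence $\|f - g\|_{\infty,X} < \epsilon$ with $f$ genuinely in $A$. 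That last constant-absorption step is the one place where such arguments commonly go wrong (by stopping at density of $\tilde A$ and forgetting that $\tilde A \neq A$), and you executed it correctly. In short: the paper imports the theorem; you proved it, by the standard and correct route.
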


The following lemma is used in the proof of Lemma~\ref{lem:basis-function}:
\begin{lemma}
The set of Gaussian functions on $\Rbb^n$ form an associative algebra $A$. $A$ vanishes nowhere and separates points.
\end{lemma}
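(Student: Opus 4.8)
The plan is to first pin down the object precisely: I take $A$ to be the real linear span of all Gaussian functions $g_{a,\beta,\mu}(x) = a\exp(-\beta\|x-\mu\|^2)$ with $a\in\Rbb$, $\beta>0$, $\mu\in\Rbb^n$ --- equivalently, the set of finite sums $\sum_i c_i f_i$ that already appears in the statement of Lemma~\ref{lem:basis-function}. With this reading, closure under addition and scalar multiplication is immediate from the definition of a span, so the real content is to verify (i) closure under pointwise multiplication, (ii) that $A$ vanishes nowhere, and (iii) that $A$ separates points; associativity and commutativity of the product are then inherited for free from pointwise multiplication of real-valued functions.

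For step (i), the key computation is that the pointwise product of two Gaussians is again a scalar multiple of a Gaussian. Writing out $g_1 g_2$ and collecting the exponent gives a quadratic in $x$ with leading term $-(\beta_1+\beta_2)\|x\|^2$; completing the square produces
\[
g_{a_1,\beta_1,\mu_1}\, g_{a_2,\beta_2,\mu_2} = a_1 a_2\, e^{C}\,\exp\!\big(-(\beta_1+\beta_2)\|x-\mu\|^2\big),
\]
with $\mu = (\beta_1\mu_1+\beta_2\mu_2)/(\beta_1+\beta_2)$ and $C$ a constant independent of $x$. Since $\beta_1+\beta_2>0$, the right-hand side is again a Gaussian and hence lies in $A$. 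Bilinearity of the pointwise product then extends this from single Gaussians to arbitrary elements of $A$, establishing that $A$ is closed under multiplication and is therefore an associative (indeed commutative) algebra.

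Steps (ii) and (iii) are short. Every Gaussian with $a\neq 0$ is strictly positive everywhere, so for each $x\in\Rbb^n$ some element of $A$ is nonzero at $x$, i.e. $A$ vanishes nowhere. For separation, given $x\neq y$ the single Gaussian $z\mapsto\exp(-\|z-x\|^2)$ takes value $1$ at $x$ and $\exp(-\|y-x\|^2)<1$ at $y$, so it separates the pair. I would also remark that each Gaussian decays to $0$ as $\|x\|\to\infty$, hence $A\subseteq C_0(\Rbb^n,\Rbb)$, which is what makes $A$ a legitimate subalgebra to feed into the generalized Stone--Weierstrass theorem used in the proof of Lemma~\ref{lem:basis-function}.

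The main obstacle --- really the only non-routine step --- is the closure under multiplication in step (i): one must confirm that the product of two Gaussians does not escape the class, which hinges on the completing-the-square identity above together with the fact that the widths add ($\beta_1+\beta_2>0$), so positive-definiteness of the exponent is preserved. The remaining properties follow essentially immediately from the positivity and decay of the exponential.
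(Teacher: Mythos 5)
Your proof is correct, and it follows the same overall outline as the paper's (verify the algebra structure, then nonvanishing, then point separation, each via explicit Gaussians). The one substantive difference is that your version is more careful on the algebra step, and in a way that matters: the paper's proof asserts that ``the product or sum of two Gaussians on $\Rbb^n$ is a Gaussian,'' but the sum of two Gaussians is generally \emph{not} a Gaussian, so the set of Gaussians alone is not closed under addition and is not literally an algebra. You repair this by taking $A$ to be the real linear span of Gaussians (finite sums $\sum_i c_i f_i$, which is also the object actually used in Lemma~\ref{lem:basis-function}), so that closure under addition is automatic and the only nontrivial point is closure under pointwise multiplication, which your completing-the-square computation establishes correctly, with bilinearity extending it to the span. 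Your remarks that $A \subseteq C_0(\Rbb^n,\Rbb)$ (Gaussians decay at infinity) and that the separating Gaussian centered at $x$ takes value $1$ at $x$ and strictly less at $y \neq x$ likewise fill in details the paper leaves implicit. In short: same approach, but your write-up is the one that actually goes through as stated.
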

\begin{proof}
The Gaussian functions on $\Rbb^n$ form an associative algebra over $\Rbb$, as the product or sum of two Gaussians on $\Rbb^n$ is a Gaussian on $\Rbb^n$.
The algebra $A$ does not vanish anywhere, as a Gaussian function can be defined at any point in $\Rbb^n$.

$A$ separates points, i.e. for any two points $x, y\in \Rbb^n$, there exists a Gaussian function $f$ s.t. $f(x)\ne f(y)$. This holds as we can let $f$ be a Gaussian centered at $x$, hence $f(x)\ne f(y)$.
\end{proof}
\label{sec:appendix}

\end{document}